\pdfoutput=1 
\documentclass[11pt]{article}

\usepackage[font=libertinus, citestyle=numeric]{kurbanlab}

\DeclareAffiliation{hbku}{%
  College of Science and Engineering, Hamad Bin Khalifa University, Doha, Qatar}

\DeclareAffiliation{tamu}{%
  Department of Electrical and Computer Engineering,
  Texas A\&M University, College Station, TX, USA}

\DeclareAffiliation{iub}{%
  Luddy School of Informatics, Computing, and Engineering,
  Indiana University Bloomington, Bloomington, IN, USA}

\DeclareAffiliation{tamuq}{%
  Department of Electrical and Computer Engineering,
  Texas A\&M University at Qatar, Doha, Qatar}

\DeclareAffiliation{ankara}{%
  Department of Prosthetics and Orthotics,
  Ankara University, Ankara, Turkey}

\usepackage{colortbl}  

\definecolor{tabverify}{HTML}{1B6E8C} 
\definecolor{tabsub}{HTML}{6B7A87}    
\definecolor{tabhi}{HTML}{EAF1F4}     
\newcommand{\sig}[1]{{\bfseries\color{tabverify}#1}}   
\newcommand{\nullv}[1]{{\color{tabsub}#1}}             
\newcommand{\rowhi}{\rowcolor{tabhi}}                  

\newcommand{\errcommit}{err$\vert$commit}

\title{Grounded verification of chemical and materials reasoning: detection is the bottleneck}
\RunningTitle{Grounded verification of chemical and materials reasoning}

\Author[orcid=0000-0002-1458-302X]{Can Polat}{tamu}
\Author[corresponding=kurbanm@ankara.edu.tr, orcid=0000-0002-7263-0234]{Mustafa Kurban}{tamuq, ankara}
\Author[orcid=0000-0001-9069-770X]{Erchin Serpedin}{tamu}
\Author[corresponding=hkurban@hbku.edu.qa, orcid=0000-0003-3142-2866]{Hasan Kurban}{hbku}

\Keywords{large language models; grounded verification; scientific reasoning; materials science; hallucination detection; uncertainty calibration}
\CodeURL{https://github.com/KurbanIntelligenceLab/grounded-matsci}

\begin{document}
\maketitle

\begin{abstract}
Language models are moving into chemistry and materials discovery workflows, where a wrong molecular formula, space group, or formation energy can silently propagate into downstream decisions. These confabulations hide inside fluent reasoning traces and concentrate on rare, long-tail entities, where model confidence is least trustworthy. Retrieving reference data for every prompt would catch them, but at a heavy coverage and abstention cost. We show that deterministic, database-grounded verification catches and repairs these errors selectively, and that the binding constraint is detection rather than repair. Our tiered verifier extracts each checkable claim, tests it against authoritative databases and physical law, and retrieves a reference value only when a check fails. Across four models and over five hundred prompts with pinned conditions, gated correction cuts the error rate of committed formulas from 22\% to 4\% with 3.2 times fewer retrievals than blanket augmentation, and it outperforms a conversational retrieval oracle when every answer, corrected or not, is scored. When a flag fires, repair almost always succeeds; the benefit reaches the final answer only where the verifier's scope covers it and where long-tail error exists. Checkable claims, checked cheaply, are a practical lever for trustworthy machine reasoning in chemistry.
\end{abstract}

\printkeywords

\section{Introduction}\label{sec:intro}

Large language models increasingly expose their intermediate reasoning, and that reasoning is fluent enough to be persuasive even when it is wrong. In chemistry and materials science the errors take a consequential form. The model commits to a specific object that is verifiable in principle but confabulated in fact, a molecular formula, a space group, a formation energy, embedded in coherent prose that offers little signal it is fabricated, a failure mode documented across chemistry benchmarks and tool-using chemistry agents \cite{mirza2025chembench,bran2024chemcrow}. These errors concentrate on the long tail. Rarely seen entities are the ones models get wrong, a dependence of factual recall on training-corpus frequency established in prior work \cite{kandpal2023large,mallen2023trust} and one we observe across four rarity strata, precisely the regime where model confidence is least trustworthy and an external check is most needed.

Interventions that need no external truth cannot fix this. Prompting a model to critique and revise its own answer cannot supply a fact it never knew, and such loops often fail to improve accuracy or degrade it \cite{huang2024selfcorrect,gou2024critic}. What is missing is an external, authoritative signal, a diagnosis shared by work that grounds claims against retrieved evidence rather than against the model itself \cite{min2023factscore}. The two established ways of supplying that signal each fall short of it. Retrieval-augmented generation \cite{lewis2020rag} provides evidence unconditionally but pays a coverage and abstention cost and cannot catch a self-contradiction whose entity is not known a priori. Verification by a second model acting as judge \cite{zheng2023judge} is flexible but unauditable and itself prone to overconfidence.

We therefore take a third route (Fig.~\ref{fig:method}a): a tiered, deterministic verifier that grounds each extractable claim against authoritative databases (PubChem \cite{kim2025pubchem}, Materials Project \cite{jain2013materialsproject}, CCCBDB \cite{johnson2022cccbdb}) and physical law, resolves each claim at the cheapest tier able to decide it (Fig.~\ref{fig:method}b), issues a hard verdict wherever an extractor is reliable, and feeds the reference value back into a gated correction loop. The check is deterministic and auditable rather than generative. On a flagged claim it returns the reference value itself as the repair signal, so correction supplies a fact the model never knew rather than resampling the model's own uncertainty, and every flag traces to a database record or a physical law.

Here we show, evaluating four open-weight and proprietary models under a five-condition protocol on a corpus of five-hundred-plus prompts pinned to explicit thermodynamic and structural conditions, that gated correction is detection-limited rather than correction-limited. Repair succeeds almost wherever a flag fires, so in-loop detection recall, not the correction machinery, is what differentiates verification surfaces and what engineering effort should target. A consistency-triggered second detection stage recovers most of the missed recall where values are extractable yet yields no accuracy lift, because it flags without a reference to repair with. Gated correction also beats a conversational oracle retriever on the intention-to-treat measure a deployed system is judged by, at substantially lower retrieval cost, because conversational fact-provision often leaves the reference value unstated in an extractable, subject-bound form; an extraction-enforcing oracle variant is complementary, winning on copyable quantitative claims where flags are missed. The lift is governed by detection recall rather than by model openness or capability tier, and it concentrates where the committed object is extractable and baseline error exists (Fig.~\ref{fig:method}c).

Two boundaries define the reach of the benefit. Object-level grounding improves object accuracy and calibration, but it lifts the final answer only when the verifier's scope extends to the answer-bearing derived quantity, which we demonstrate by extending the tier ladder and restoring the lift. The method ports across domains through its orchestration layer alone, and the gain appears exactly where extractable long-tail error exists, absent on physical constants where models sit near ceiling and large on long-tail isotope half-lives and materials formulas, with a reference-frame-artifact analysis for quantitative properties recurring across domains. The corpus, frozen ground truth, and frozen pipeline are released with all registration hashes, so every headline number is reproducible under one-shot-holdout discipline.

\begin{figure}[!t]
\centering
\includegraphics[width=\textwidth]{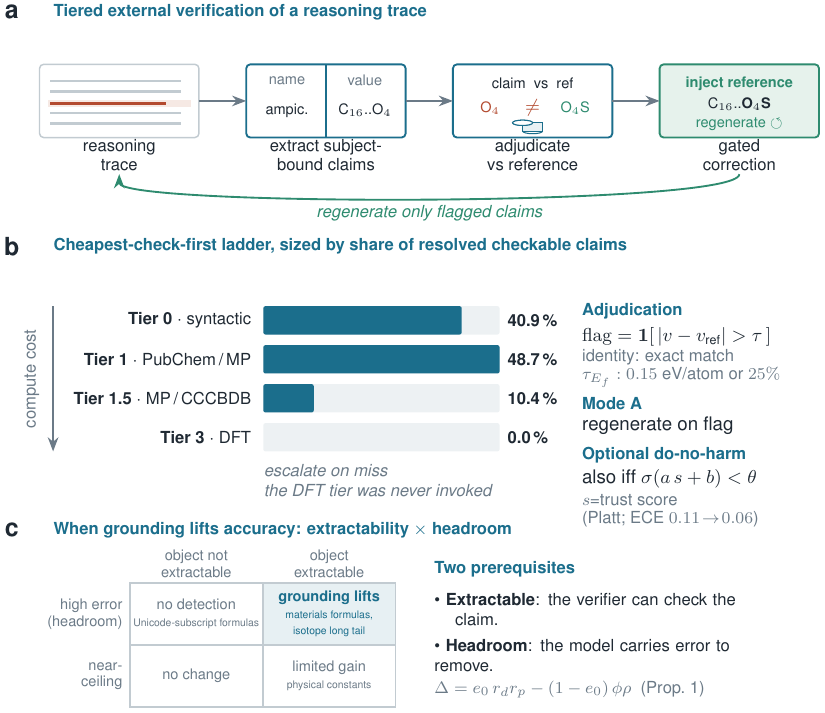}
\caption{Method overview. \textbf{a}, Verify-and-correct loop (ampicillin schematic): extract
(name,~value) claims, check them against PubChem/Materials Project/CCCBDB, and on flag re-inject the reference in
verifier feedback and regenerate, repairing flagged claims only. \textbf{b}, Cheapest-first tier ladder with per-tier holdout resolution shares (density-functional-theory (DFT) tier unused).
Quantitative flags use $\mathrm{flag}=\mathbf{1}[|v-v_\mathrm{ref}|>\tau]$
($\tau_{E_f}=0.15$~eV/atom or $25\%$; identity exact match); Mode~A regenerates on flag, with an
evaluated Platt trust gate on the constants rerun. \textbf{c}, Lift requires extractability
$\times$ headroom (materials formulas and isotope long tail); near-ceiling or unparseable cases show
limited or no gain (Prop.~1).}\label{fig:method}
\end{figure}
\section{Results}\label{sec:results}
\providecommand{\verify}{\textsuperscript{\textcolor{red}{[verify]}}}

\subsection{Confabulation exists and scales with rarity}\label{subsec:rarity}
Molecular-formula error at baseline rises monotonically with compound rarity (PubChem-CID-ordered strata), from 0\% on well-known compounds to 58\% on recent ones (baseline run, pooled over models; per-stratum values in Fig.~\ref{fig:overview}a). Recent drugs (lenacapavir, sotorasib, ensitrelvir) are where models confabulate formulas
most, exactly the long-tail objects a database verifier is positioned to catch. The full error
taxonomy (by model $\times$ claim type $\times$ stratum, with representative examples) is in
Supplementary Table~\ref{tab:errors}.

\subsection{Gated grounding closes the gap and beats the oracle on the deployment metric}\label{subsec:gated}
On molecular formula, gated correction (Mode~A) drives error-given-commitment from 22\% to 4\% while retrieving only on flagged cells, $3.2\times$ fewer retrievals than blanket retrieval-augmented generation (RAG) (Supplementary Table~\ref{tab:compute} and Supplementary Fig.~\ref{fig:cost}). On intention-to-treat (ITT) error, wrong or non-committal under the frozen extractor and the metric a user experiences, gated Mode~A reaches 25\% against \emph{conversational} oracle RAG's 43\%: handed the fact but not told to restate it, the model often answers in prose without emitting an extractable, subject-bound value (42\% no-commit), whereas Mode~A's direct correction question holds no-commit at the baseline 22\% (Fig.~\ref{fig:overview}b). The oracle framing is an upper bound on retrieval quality; a realistic, non-oracle retriever resolving each entity live is characterised in Supplementary Note~\ref{note:nonoracle}.

That no-commit is largely a phrasing artifact rather than model abstention. A second oracle variant forcing
explicit restatement of the retrieved value (constrained RAG) recovers it: 73\% of the constrained
variant's no-commits, and 65\% of the conversational variant's, are cells where the correct formula appears
in the text but is not bound to the subject in the pattern the frozen extractor requires. We therefore
report those surfaces as no-commit-under-the-frozen-extractor rather than as abstention, and the
extractability limit this exposes is the same axis that governs where grounding pays off. Where the
deliverable is a single copyable number the confound vanishes: on formation energy, constrained RAG reaches
ITT error 1.0\% against conversational RAG's 25\% and gated Mode~A's 33\%. Forcing restatement of a
retrieved scalar solves the surface Mode~A cannot, because Mode~A is detection-limited there, so the two are
complementary, gated correction winning on cost and on identity claims and extraction-enforcing retrieval on
copyable quantitative values.

Both oracle conditions are upper bounds, and replacing the oracle with live entity resolution shows the
gain survives only where entities are name-addressable. Realistic retrieval matches the oracle on
molecular formula (0.2\% versus 0.3\% error given commitment against the 22\% baseline) but recovers
almost none of it on the long-tail crystalline and formation-energy surfaces, where coverage reaches only
about a fifth of entities and the retriever returns the most-stable polymorph rather than the named phase,
so abstention rises to 56--59\% as the model, handed no usable fact, declines to commit. Retrieval
degrades precisely where the named entity requires phase disambiguation, the regime in which the
verifier's claim-specific retrieval still succeeds.

\begin{figure}[htbp]
\centering
\includegraphics[width=\textwidth]{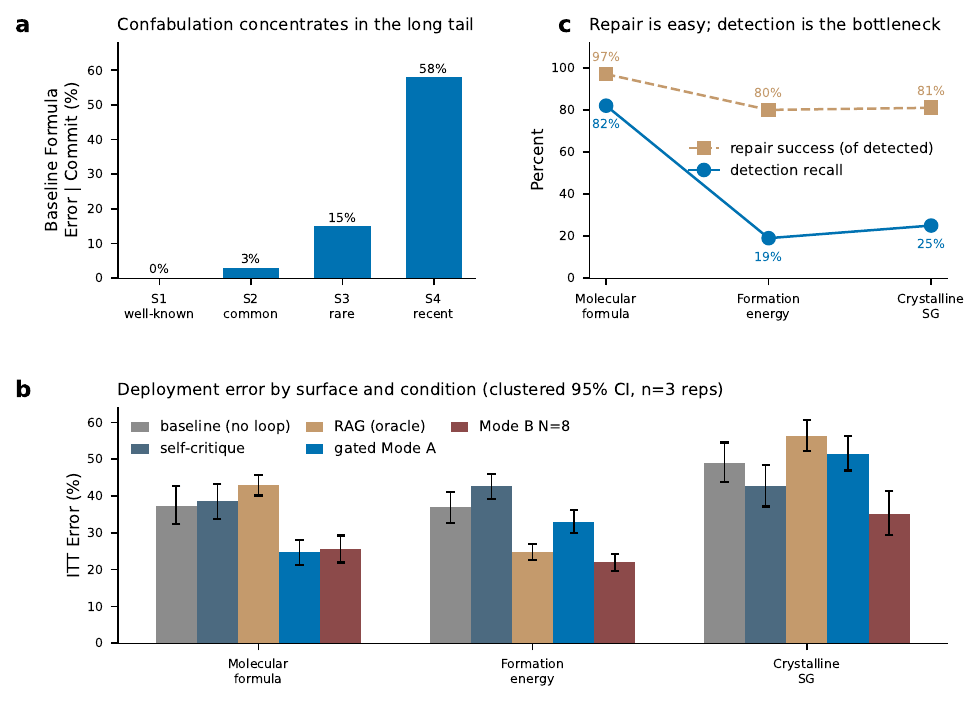}
\caption{The problem and the fix. \textbf{a}, Baseline molecular-formula error given commitment by compound-rarity stratum. \textbf{b}, Intention-to-treat error (wrong or no-commit) by surface and condition; error bars, prompt-clustered 95\% confidence intervals; Mode~A, gated correction; Mode~B, best-of-8 rerank. \textbf{c}, Two-stage decomposition of gated correction: repair success and in-loop detection recall by surface.}\label{fig:overview}
\end{figure}

The pooled formula effect is not uniform across models, and the per-model decomposition in
Table~\ref{tab:permodel} shows why: the lift lands on the two open-weights models, which carry both
extractable formulas and real long-tail error, while the two frontier closed models start near
ceiling and have almost no formula headroom for a verifier to remove.

\begin{table}[htbp]
\caption{Per-model lift on molecular formula and formation energy (baseline $\rightarrow$ Mode~A \errcommit; McNemar net corrections on aggregated units). Teal marks a significant gain; grey marks a non-significant change.}\label{tab:permodel}
\footnotesize
\setlength{\tabcolsep}{4pt}
\renewcommand{\arraystretch}{1.18}
\begin{tabular*}{\textwidth}{@{\extracolsep{\fill}}ll cc c cc@{}}
\toprule
& & \multicolumn{3}{c}{\textbf{Molecular formula}} & \multicolumn{2}{c}{\textbf{Formation energy}} \\
\cmidrule(lr){3-5}\cmidrule(l){6-7}
model & class & \errcommit & net & $p$ & net & $p$ \\
\midrule
Claude Sonnet 5 & closed & 10\%\,$\rightarrow$\,8\% & \nullv{$+1$} & \nullv{n.s.} & \nullv{$+5$} & \nullv{n.s.} \\
GPT-5.5 & closed & 1\%\,$\rightarrow$\,0\% & \nullv{$0$} & \nullv{n.s.} & \nullv{$+3$} & \nullv{n.s.} \\
\rowhi DeepSeek v4 Pro & open & 20\%\,$\rightarrow$\,2\% & \sig{$+23$} & $5.6\!\times\!10^{-6}$ & \sig{$+18$} & $7.6\!\times\!10^{-6}$ \\
\rowhi Qwen3-235B & open & 46\%\,$\rightarrow$\,1\% & \sig{$+58$} & $6.9\!\times\!10^{-18}$ & \nullv{$+1$} & \nullv{n.s.} \\
\bottomrule
\end{tabular*}
\tabnote{net, McNemar net corrections (fixed $-$ broken); \errcommit, error-given-commitment;
class, closed = frontier closed-weights, open = open-weights; n.s., not significant ($p>0.05$).}
\end{table}

The formation-energy column tells the complementary story. Qwen3-235B has the largest formation-energy (EF) headroom of any model (59\% baseline error-given-commitment) yet shows no EF lift, because its in-loop EF detection recall is $0/57$: the verifier never catches those errors during generation, so there is nothing to repair, and headroom without detection yields no lift. On a domain where the frontier closed models do carry long-tail error, isotope half-lives, the same loop lifts both significantly, so the method is a reliability layer wherever a model is weakest on the long tail, frontier or open.

The operative axis is detection recall, not model openness. Adding two small closed frontier-family models
(Claude Haiku 4.5, GPT-5.4 mini; verified live, full corpus, triplicate, registered ex ante) does not
reproduce the open-weights pattern: GPT-5.4 mini shows a small non-significant molecular lift, and Claude
Haiku 4.5 carries a high baseline molecular error (50\%) that the loop leaves untouched because it emits
formulas in Unicode subscripts the frozen extractor cannot parse, driving in-loop recall to approximately
zero. The operative variable is therefore detection recall, whether the committed object is extractable,
which interacts with output formatting, multiplied by baseline error, rather than open-versus-closed
weights or capability tier.

\subsection{An error-reduction identity for gated correction}\label{subsec:theory}
Gated correction acts on a fixed population of committed, gradable claims, moving each through a
two-outcome tree, so its effect on error-given-commitment decomposes exactly. Fix a surface and the
population of claims committed and graded in both the baseline and Mode-A arms. Let $e_0$ be the baseline
error-given-commitment; $r_d$ the in-loop detection recall, the probability that a baseline-wrong claim is
flagged and regenerated during Mode-A generation; $r_p$ the repair success, the probability that a flagged
wrong claim becomes correct once the reference value is injected; $\phi$ the false-positive rate; and
$\rho$ the regeneration-break rate, the probability that such a regeneration turns a correct claim wrong.
All five are conditional rates estimated directly from the runs, and where a Platt do-no-harm gate is
active it is folded into $r_d$ and $\phi$.

\begin{proposition}\label{prop:identity}
On the committed population, the post-correction error-given-commitment is
\begin{equation}
e_1 \;=\; e_0\,(1-r_d r_p) \;+\; (1-e_0)\,\phi\rho,
\label{eq:identity}
\end{equation}
\end{proposition}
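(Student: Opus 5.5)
The identity follows from the law of total probability applied to a partition of the committed population, so the plan is to condition on each claim's baseline correctness and then trace its path through the Mode-A loop. Write $W$ for the event that a committed claim is wrong at baseline, so $\Pr(W)=e_0$ and $\Pr(\bar W)=1-e_0$. Let $F$ be the event that the claim is flagged and regenerated in-loop, and let $W'$ be the event that it is wrong after Mode~A. Then
\[
e_1=\Pr(W')=\Pr(W'\mid W)\,e_0+\Pr(W'\mid \bar W)\,(1-e_0),
\]
and each conditional term can be expanded separately over $F$ and $\bar F$.

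\textbf{Baseline-wrong branch.} By definition $\Pr(F\mid W)=r_d$. Given $W\cap F$, the claim becomes correct with probability $r_p$. Given $W\cap\bar F$, Mode~A leaves the committed claim untouched, so it stays wrong. Hence
\[
\Pr(W'\mid W)=r_d(1-r_p)+(1-r_d)=1-r_dr_p.
\]

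\textbf{Baseline-correct branch.} Here $\Pr(F\mid\bar W)=\phi$, and a regeneration breaks a correct claim with probability $\rho$. An unflagged correct claim is not regenerated and so stays correct, which gives $\Pr(W'\mid\bar W)=\phi\rho$. Substituting both branches into the total-probability decomposition gives \eqref{eq:identity}.

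The identity is exact because of how the five rates are defined and estimated. All five are conditional frequencies on the same paired population, namely claims committed and graded in both arms, so the decomposition holds for empirical frequencies as well as for population probabilities. When a Platt do-no-harm gate is active, the gate sits between the flag and the regeneration. Redefining $F$ as ``flagged and passed by the gate'' absorbs the gate into $r_d$ and $\phi$, exactly as the statement stipulates, and the argument is otherwise unchanged.

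\textbf{Main obstacle.} The step needing care is not the algebra but justifying the two ``no change'' transitions. The first is that an unflagged claim keeps its baseline correctness status in the Mode-A arm. The second is that a flagged wrong claim, if not repaired, ends up wrong rather than uncommitted. Both rely on restricting to the population committed in both arms and on the pairing between arms. Unflagged cells must pass through unchanged, and any stochastic re-drift between independent arms must either be excluded or be absorbed into the rates. The definitions in the preceding paragraph already do this:
\begin{itemize}
\item $r_p$ is defined on flagged wrong claims, with any non-repair outcome counted as remaining wrong;
\item $\rho$ is defined on regenerations of correct claims.
\end{itemize}
Given these definitions, the case analysis is exhaustive and the identity holds as stated.
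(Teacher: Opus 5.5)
Your proposal is correct and takes essentially the same route as the paper's proof: it partitions the committed-in-both-arms population by baseline correctness, splits each branch on the gated fire/hold decision (with the Platt gate folded into $r_d$ and $\phi$), and combines $e_0(1-r_dr_p)$ and $(1-e_0)\phi\rho$ by the law of total probability. One small refinement: that unflagged claims keep their baseline status is a stipulation of the setup, as the paper states it, not something the rate definitions deliver, so drift of unflagged claims could not be ``absorbed into the rates'' without changing what $r_d$ and $\phi$ mean.
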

so the net reduction is
\begin{equation*}
\Delta = e_0-e_1 = e_0\,r_d r_p - (1-e_0)\,\phi\rho.
\end{equation*}

\noindent\emph{Proof sketch.} A baseline-wrong claim stays wrong unless it is both detected-and-gated and
repaired, probability $r_d r_p$; a baseline-correct claim turns wrong only if it is both falsely
flagged-and-gated and broken on regeneration, probability $\phi\rho$. Summing over the mass-$e_0$ and
mass-$(1-e_0)$ subpopulations gives \eqref{eq:identity} by the law of total probability. The complete
argument, and the form expressing the harm term through detector precision, are in Supplementary
Note~\ref{note:identity}.

The identity turns three descriptive findings into exact statements using only quantities the experiments
already report. \emph{Detection-limited:} repair is high and near-uniform across surfaces
($r_p\in[0.80,0.97]$) while recall varies widely ($r_d\in[0.19,0.94]$), so the across-surface spread of
$\Delta$ is carried by $r_d$. \emph{Headroom-gated and do-no-harm:} the gain term scales with $e_0$, so a
near-ceiling surface can only lose, $\Delta=-(1-e_0)\phi\rho\le0$, which is why the frontier models incur
small net harm on physical constants and why a calibrated gate lowering $\phi$ is the natural do-no-harm
instrument. \emph{Reference-necessity:} a detector that flags without injecting a reference replaces $r_p$ by
the model's unaided resample-correct rate, near zero for a long-tail identity it never learned, so recall can
climb with no accuracy lift, exactly the consistency-stage result below. Substituting the measured
molecular-formula rates ($e_0=0.22$, $r_d=0.82$, $r_p=0.97$) predicts $4.5\%$ against the observed $4\%$.

\subsection{The mechanism is two-stage: repair is easy, detection is the bottleneck}\label{subsec:mechanism}
Gated correction is detect, then repair, and the two stages behave very differently. Wherever the verifier flags an error, the loop repairs it at 80--97\% whether the surface is molecular formula, formation energy or space group (SG): the model readily accepts an injected reference fact regardless of surface. The lift a surface receives is therefore set almost entirely by in-loop detection recall, the fraction of baseline-wrong prompt$\times$model units whose flaggable claim is caught during Mode~A generation, which is high on formula and low on formation energy and space group; both stages are decomposed per surface in Fig.~\ref{fig:overview}c. In-loop recall runs below offline recall, where the same verifier scores frozen baseline claim text, because in the loop the model must re-emit the flaggable claim, and generation variance, claim-binding context and gating timing all reduce what the verifier ever sees. The in-loop number is the
deployment-relevant one and is used throughout; the offline number bounds the extractor. Formation energy,
even after a dedicated detection tier raised in-loop recall from a tier-off 7\% to 19\%, gets only a small
though significant lift (31\% to 24\%), and the residual is precisely the undetected
errors rather than failed repairs.

A hybrid detection stage raises recall without raising accuracy, and localizes why. Adding a
consistency-triggered second stage to Mode~A as a two-stage triage, with the deterministic tier firing first
and the sampling-consistency signal invoked at its dev-frozen threshold only on units the deterministic tier
misses, raises in-loop EF detection recall from 19\% to 94--95\% where values are extractable, yet
error-given-commitment is flat-to-worse. Two mechanisms explain this. The consistency signal has poor
precision on this surface, flagging 94\% of Claude Sonnet 5's EF cells including 94\% of the correct ones
(precision 28\%), because EF values scatter across samples whether or not the committed value is right. And it
signals unreliability without injecting a reference value, unlike the deterministic tier which carries the
database fact, so only 46\% of regenerated flagged cells become correct: the model resamples another wrong
value it never learned. Detection and repair are both necessary, and a detector that flags without a fact
raises recall alone, so we report the recall gain and make no causal accuracy claim.

\subsection{Rerank and correction are complementary, indexed by where the truth lives}\label{subsec:rerank}
Best-of-8 rerank (Mode~B) beats gated correction on formation energy (\errcommit\ 21\% versus 24\%, ITT 22\%
versus 33\%), the complement of the identity-claim result, and the reason is where the
truth lives. For a property the model has seen but samples noisily, eight samples surface the correct value
from its own distribution; for a long-tail identity it never learned, no number of samples contains the fact
and only external correction helps. Those identity errors do surface as cross-sample disagreement, but
disagreement signals only that the model is unsure: the samples scatter across wrong values with no correct
one among them, making it a reliable flag that points to no answer. Across surfaces Mode~B is best-or-tied on
ITT everywhere and wins crystalline and formation energy, while Mode~A wins molecular formula on \errcommit\
and on cost; blanket RAG worsens crystalline ITT, because provided facts raise abstention rather than
accuracy on a surface the models rarely commit on. Mode~B's gain carries a compute cost: best-of-8 issues
about $8\times$ the baseline generation calls against gated Mode~A's ${\sim}1.75\times$.

\subsection{Self-correction without tools does not help and marginally harms}\label{subsec:selfcritique}
Self-critique (no tools) improved no surface, moving molecular-formula and crystalline error by at most one point, and marginally harmed formation energy (31 to 38\% \errcommit: 55 correct answers broken vs 35 fixed), consistent with prior evidence that language models do not reliably self-correct reasoning without an external signal. A stronger tool-free baseline, Chain-of-Verification \cite{dhuliawala2024cove}, in which the model plans and independently answers its own verification questions before revising, gives the same result and closes no surface (Supplementary Note~\ref{note:cove}). This is the negative control: the lift under tool-grounded conditions is attributable to external truth, not to the extra reasoning turn.

\subsection{Naive single-frame property verification is dominated by reference-frame artifacts}\label{subsec:frames}
A verifier checking a quantitative property against a single reference systematically mislabels correct
answers when model and reference use different physical frames. Our Materials Project band-gap ground truth is
PBE-DFT, which underestimates experimental gaps by 1--2~eV, while models report experimental gaps: a naive
verifier flagged 116 band-gap ``errors'', of which 107 (92\%) were correct experimental values, and under an
accept-either-frame policy band-gap error falls from 34.6\% to 0\%. Band gap is not an error surface for these
models; it is a frame-mismatch detector. Formation energy is the control, since 0~K DFT and experimental
$\Delta H_f^{\circ}$ at 298~K differ by only 0.11~eV/atom on average, so accept-either-frame rescued just 9
cells and formation energy remains a genuine error surface. The same lesson recurred for physical constants,
where the Sackur--Tetrode constant is tabulated at two standard-state pressures and models reporting the modern
value were initially graded wrong against a reference using the older convention, cured only by pinning the
standard state in the ground truth. Property verification must therefore register the reference frame ex ante,
grade under accept-either-documented-frame where frames diverge, and never ``correct'' an experimental value to
a DFT one (Supplementary Table~\ref{tab:frames}).

\subsection{Deterministic verifier vs LLM-as-judge vs sampling-based detectors}\label{subsec:detectors}
We contrast four detectors on the same baseline claims: the deterministic verifier; a frontier
reasoning-class LLM-as-judge with no tools; a sampling-consistency baseline
\cite{manakul2023selfcheckgpt} flagging a claim when the object extracted from Mode~B's eight samples
disagrees (modal-object agreement $\le 0.25$, dev-frozen); and a semantic-entropy detector
\cite{farquhar2024semantic} clustering those samples by meaning and flagging high normalized cluster
entropy (threshold $0.44$, likewise dev-frozen). Supplementary Table~\ref{tab:detectors} gives per-surface
precision and recall for all four and shows why the deterministic checker is the backbone: only it returns
a reference value to inject. These offline recalls, scored on frozen baseline claim text, upper-bound the
extractor and run systematically above the in-loop recalls, the gap reflecting generation variance and
gating timing rather than a different verifier.

Each detector has a distinct strength. The deterministic verifier dominates recall where its extractor is
reliable and is auditable, improvable and frozen. The judge is precise where symbolic
extraction is hard but is a fixed black box subject to the same confabulation it is meant to catch.
Sampling-consistency has the strongest recall on formation energy (0.67), exactly where the deterministic
verifier is weakest (0.31), because noisy-sampling errors surface as cross-sample disagreement. The predicted
failure mode, that consistency should miss confidently-wrong long-tail formulas because all eight samples
agree, does not hold: on rare (R3--R4) molecular formulas consistency recall is 86\% at mean agreement 0.21,
so models disagree across samples on long-tail objects rather than confabulating one value consistently.
Semantic entropy refines that signal, improving formation-energy precision (0.51 versus 0.36) by clustering
near-duplicate values by meaning rather than by exact string, at slightly lower recall. Like
sampling-consistency it is detection-only and returns no reference value, the axis that separates both from
the deterministic tier.

\subsection{Trust-score calibration}\label{subsec:calibration}
The verifier's trust score, the fraction of checked claims that pass, predicts committed-claim
correctness and is better calibrated than the model's own confidence: on the paired population verbalized confidence has expected calibration error (ECE) 0.139 against the trust score's
0.093, and Platt scaling \cite{platt1999} on a held-out half reduces the trust score to 0.060
\cite{guo2017calibration}. Eliciting confidence inline rather than post hoc makes the model more
confident without making it better calibrated. The grounded signal is therefore the better abstention
instrument, and the calibrated form is what the do-no-harm gate thresholds on; the reliability
diagrams, the full ECE and Brier values, and the isotonic comparison are in Supplementary Note~\ref{note:calibration}.

\subsection{Downstream end-task propagation}\label{subsec:endtask}
To test whether claim-level grounding translates into end-task accuracy, we registered 52 tasks, frozen ex
ante, whose final answer depends on a checkable object, molar-mass computation from formula, higher-molar-mass selection and more-stable-phase selection from formation energy, weighted toward the rare strata (counts in Table~\ref{tab:endtask}). Object errors propagate at 9\% of baseline task units. Object-only gated grounding
nonetheless did not lift end-task accuracy: ITT accuracy was 83\% at baseline, 80\% under Mode~A and 85\%
under Mode~B, the Mode-A change not significant ($p\approx0.6$; Table~\ref{tab:endtask}). The mechanism is diagnostic. The
verifier detects objects but fires largely on incidental claims the system prompt elicits (flag rate 58\%, of
which an audit finds 203 of 205 bear on no part of the answer), fixes only 28\% of propagated errors because
the final answer depends on downstream arithmetic the verifier does not check, and adds regeneration noise
that breaks 8\% of already-correct answers. This is a grounding-scope mismatch, not an extractor robustness
failure.

That diagnosis makes a testable prediction, and closing the scope gap confirms it. We added an
orchestration-layer derived-quantity tier, registered ex ante, computing the answer-bearing quantity from the
verified object and injecting it when the model's final answer disagrees. Re-running the same 52 tasks, pooled end-task ITT accuracy rose significantly, from 83\% to 90\%, against 80\% under the object-only condition (Table~\ref{tab:endtask}). The lift concentrates in one open-weights model, Qwen3-235B, while the two frontier models have no headroom and DeepSeek v4 Pro shows small non-significant harm; by task type it falls on molar-mass (86\% to 95\%). Grounding scope is therefore the
relevant boundary: verifying the checkable object improves object accuracy and calibration, whereas extending
verification to the answer-bearing derived quantity is what improves the end task.

\begin{table}[htbp]
\caption{End-task grounding (52 registered tasks, frozen ex ante; 4 models, triplicate). \textbf{Top}, baseline error propagation by task type. \textbf{Bottom}, end-task ITT accuracy, baseline versus the added orchestration-layer derived-quantity tier, per model. Shading marks statistically significant results.}\label{tab:endtask}
\footnotesize
\setlength{\tabcolsep}{4pt}
\renewcommand{\arraystretch}{1.18}
\begin{tabular*}{\textwidth}{@{\extracolsep{\fill}}l cc c@{}}
\toprule
\multicolumn{4}{@{}l}{\itshape Baseline error propagation, by task type} \\
task type & \multicolumn{2}{c}{$n$} & propagated error \\
\midrule
molar-mass computation from formula & \multicolumn{2}{c}{26} & 12\% \\
higher-molar-mass selection & \multicolumn{2}{c}{14} & 7\% \\
more-stable-phase selection (formation energy) & \multicolumn{2}{c}{12} & 4\% \\
overall & \multicolumn{2}{c}{52} & 9\% \\
\addlinespace[2pt]
\midrule
\multicolumn{4}{@{}l}{\itshape Object-only vs.\ {+}derived-quantity tier (per model, end-task ITT accuracy)} \\
model & baseline & {+}derived tier & net ($p$) \\
\midrule
Claude Sonnet 5 & 92\% & 94\% & \nullv{n.s.} \\
GPT-5.5 & 71\% & 73\% & \nullv{n.s.} \\
DeepSeek v4 Pro & 96\% & 92\% & \nullv{$-2$, n.s.} \\
\rowcolor{tabhi} Qwen3-235B & 73\% & \sig{100\%} & \sig{$+14$}~{\footnotesize($1.2\!\times\!10^{-4}$)} \\
\rowcolor{tabhi} pooled & 83\% & \sig{90\%} & \sig{$+14$}~{\footnotesize($0.0066$)} \\
\bottomrule
\end{tabular*}
\tabnote{net, McNemar net corrections (pooled: 5 broken / 19 fixed); ITT accuracy, fraction of all evaluated units with a correct final answer, non-committal outputs counted as incorrect; n.s., not significant ($p>0.05$).}
\end{table}

\subsection{Cross-domain transfer: a near-ceiling domain and a headroom domain}\label{subsec:transfer}
To test whether the method template (extract checkable object $\rightarrow$ tiered identity
lookup $\rightarrow$ gated repair) transfers beyond materials chemistry, we applied it to two
further domains chosen to bracket the headroom axis: physical constants (where current
models are near-ceiling, testing whether the mechanics port) and isotope half-lives (where
a genuine long-tail error mass remains, testing whether the accuracy lift ports). Only the
orchestration layer changed in each; the core extractor and verifier configuration
were untouched.

In the near-ceiling domain we graded 35 CODATA 2022 recommended values \cite{mohr2025codata} across four
rarity strata under a registered two-part tolerance policy, so neither the adjustment version nor
quoted-digit truncation can register as an error. The audit of this domain is itself a finding: a first pass
reported a large lift (84\% to 99\%) that a line-by-line audit traced almost entirely to an
orchestration-layer value-parser artifact (Supplementary Note~\ref{note:constants}). After hardening the parser and re-grading, baseline and gated accuracy are statistically indistinguishable, with no model changing significantly (Table~\ref{tab:transfer}). The mechanics port cleanly, but a near-ceiling
domain cannot on its own provide an independent accuracy-lift result. That audit prompted a materials
retro-audit for the same failure mode, which confirmed the materials extractor abstains symmetrically on
unparseable formulas, so the materials lift cannot be parse-manufactured.

The headroom domain supplies the lift result the constants cannot: half-lives of 63 rarity-stratified
isotopes, from famous radioisotopes to superheavy and exotic, graded against the IAEA NDS reference
\cite{iaea_nds} with ground truth and the isotope tier frozen and hashed before grading (4 models,
triplicate). Applying the constants lesson pre-emptively, we hardened the value parser before grading and
registered the unit policy ex ante. This mattered: a first-pass parser mis-read comma-grouped values and
$N\times10^{M}$ forms, inflating baseline error to 40\% and the effect to net $+50$, a constants-class
artifact the two-channel gate caught before it reached the paper. Under Mode~A, ITT accuracy rose significantly from 81\% to 91\% (Table~\ref{tab:transfer}) while error-given-commitment fell from 11.3\% to 0.0\% at symmetric abstention, so the lift is not parse-manufactured. It follows the same rarity gradient as materials, near-ceiling on famous isotopes and largest on the superheavy and exotic stratum, where \errcommit\ falls from 41\% to 0\% (Table~\ref{tab:transfer}).

\begin{table}[htbp]
\caption{Transfer to two non-materials domains under orchestration-layer changes only (frozen core untouched). \textbf{Top}, physical constants (CODATA 2022, post-audit), ITT accuracy per model. \textbf{Bottom}, isotope half-lives (IAEA NDS, hardened parser), ITT accuracy per rarity stratum. Shading marks statistically significant results.}\label{tab:transfer}
\footnotesize
\setlength{\tabcolsep}{4pt}
\renewcommand{\arraystretch}{1.18}
\begin{tabular*}{\textwidth}{@{\extracolsep{\fill}}ll cc c@{}}
\toprule
& & baseline & Mode~A & net ($p$) \\
\midrule
\multicolumn{5}{@{}l}{\itshape Physical constants against CODATA 2022 (per model, ITT accuracy)} \\
Claude Sonnet 5 & frontier closed & 100.0\% & 97.1\% & \nullv{n.s.} \\
GPT-5.5 & frontier closed & 100.0\% & 94.3\% & \nullv{n.s.} \\
DeepSeek v4 Pro & open-weights & 91.4\% & 97.1\% & \nullv{n.s.} \\
Qwen3-235B & open-weights & 88.6\% & 94.3\% & \nullv{n.s.} \\
overall & & 95.0\% & 95.7\% & \nullv{$+1$ (1.0)} \\
\addlinespace[2pt]
\midrule
\multicolumn{5}{@{}l}{\itshape Isotope half-lives against IAEA NDS (per rarity stratum, ITT accuracy)} \\
R1 famous & \hfill 15 & 98\% & 99\% & \nullv{--} \\
R2 common/applied & \hfill 16 & 95\% & 99\% & \nullv{--} \\
R3 rare & \hfill 16 & 88\% & 90\% & \nullv{--} \\
\rowcolor{tabhi} R4 superheavy/exotic & \hfill 16 & 44\% & \textbf{76\%} & \nullv{--} \\
\rowcolor{tabhi} overall & \hfill 63 & 81\% & \sig{91\%} & \sig{$+30$}~{\footnotesize($2.3\!\times\!10^{-7}$)} \\
\bottomrule
\end{tabular*}
\tabnote{ITT accuracy, fraction of all evaluated units with a correct final answer, non-committal outputs counted as incorrect; net, McNemar net corrections; n.s., not significant. The second column gives model class in the top block and isotope count in the bottom block; ``--'' indicates that no per-row significance test was performed.}
\end{table}

The frontier-model lift on this surface is suggestive rather than confirmatory. It is nominally significant
for both frontier-closed models, Claude Sonnet 5 88\% to 99\% and GPT-5.5 82\% to 93\%, but neither $p$-value survives correction for multiple comparisons, and
re-estimating the arm at five replicates leaves both the net lifts and that framing unchanged (Supplementary Note~\ref{note:isotope}). The open-weights lift on the same surface is stronger, with Qwen3-235B 70\% to
85\% surviving correction while DeepSeek v4 Pro is directional. Isotope half-lives are
nonetheless the first surface here on which grounding reaches a frontier closed model at all, because it is
the first on which those models carry genuine long-tail error mass rather than sitting at ceiling: when a
frontier model errs on the long tail and the committed value is extractable, gated correction can reach it,
a reading these data support without establishing at corrected significance.

Across the three transfer domains, then, the mechanics port with orchestration-layer changes only, while the accuracy benefit lands only where a base model still carries extractable error on rarely seen entities: nowhere on the near-ceiling constants, strongly on the isotope and materials-formula long tails. Portability is of the mechanism; the magnitude of benefit is set by headroom $\times$ detection recall (Proposition~\ref{prop:identity}).

The interventions differ sharply in compute cost, which bears on deployment. Gating is the cheap regime: gated Mode~A retrieves on only the flagged fraction of prompt$\times$model cells, and the retrieval saving over blanket RAG reported above widens to $6.2\times$ and $8.4\times$ fewer on formation energy and space groups, at a modest generation-cost overhead, whereas best-of-8 rerank is the expensive regime, both quantified per condition in the compute accounting. The tier ladder keeps per-check cost low: 40.9\% of checkable claims resolve at the Tier-0 syntactic check, 48.7\% at database identity lookups and 10.4\% at tabulated references, with the DFT tier invoked on no holdout claim.
\section{Discussion}\label{sec:discussion}
The results reframe verifier-in-the-loop grounding as a detection-limited rather than a
correction-limited problem, and place it against its two default alternatives. Against blanket
retrieval, gated correction is markedly cheaper on molecular formula and better on the
intention-to-treat deployment metric, because it does not induce the abstention that conversational
fact-provision does. Against LLM-as-judge verification it is auditable and frozen wherever extraction
is reliable, and we report the two side by side rather than assuming either dominates. The
selective-prediction view \cite{elyaniv2010selective} unifies the metrics: error-given-commitment and
abstention trade error against coverage, and we report both the ITT and selective views without
collapsing them. Rerank and correction remain complementary tools.

A deployment caveat follows directly from the near-ceiling models: when the
base answer is already correct, a flag that fires anyway triggers a regeneration that can break it, which
is why the two frontier models show small net harm on the constants domain. Decomposing that harm made the
fix concrete, because the harm and lift populations separate by flag correctness: every frontier harm cell
was a false-positive flag on an already-correct constant, while all open-model lift came from
true-positive flags. A gated rerun using corrected reference values and the dev-frozen Platt-trust
threshold reduced frontier harm from five cells to zero while largely retaining the open-weights lift,
taking overall harm from seven cells to two. Notably that harm was eliminated by correcting a stale
Sackur--Tetrode reference convention rather than by the confidence gate, which suppressed no
regenerations: on a single-claim surface the trust score is binary and perfectly correlated with the
verifier flag, so validating confidence-weighted suppression requires a multi-claim setting with
non-binary trust. The calibrated trust score remains the promising do-no-harm signal on the strength of
its calibration advantage over model self-report.

\subsection*{Our verifier sits between three lines of work} Retrieval-augmented generation prepends
retrieved facts unconditionally, whereas we retrieve only on a detected flag, which is both cheaper and,
on the deployment metric, more accurate. Self-correction and self-critique
\cite{madaan2023selfrefine} revise using the model's own judgment; our no-tools condition reproduces
that negative result and isolates the external signal as the active ingredient. Sampling-based
hallucination detection flags low cross-sample agreement, and we include it as a third detector, finding
it complementary and strongest exactly where symbolic extraction is weakest. Related uncertainty-based
detectors, semantic entropy and internal-state probes, work from the same cross-sample or
representational signal but likewise report only that a claim is unreliable and return no reference
value to inject, so in our decomposition they remain detection-only. Our two-stage triage relates to
detector aggregation, which we implement not to maximize F1 but as a causal instrument that raises
in-loop recall; to two-stage uncertainty triage, which we invert so the deterministic tier fires first
and the consistency signal only on its misses; and to stepwise consistency filtering, which we apply to
committed objects rather than to reasoning steps.

Within chemistry and materials specifically, several recent efforts are adjacent but scoped differently.
Process-level deterministic benchmarks \cite{chemcotbench,moleculariq} verify reasoning steps
deterministically but are evaluation-only and molecular-only, with no correction loop, no crystalline surface
and no cost accounting. Retrieval and knowledge-graph hallucination work for materials \cite{hallumat}
grounds against retrieved documents or graphs, whereas ours grounds against deterministic physics and
structured databases and reports the end-task and cost consequences rather than detection alone. Tool-using
chemistry agents \cite{bran2024chemcrow} and LLM chemistry benchmarks \cite{mirza2025chembench} document the
tool-integration and overconfidence problems and the failure of LLM-judge scoring; we quantify that
overconfidence through calibration and avoid LLM-judge labelling entirely. Agentic self-verification instead
places the check inside the model: xChemAgents \cite{polat2025xchemagents} pairs a descriptor-selecting agent
with a Validator agent enforcing unit consistency and scaling laws through iterative dialogue, improving both
accuracy and interpretability on property prediction. That validator is itself a language model, so its
verdicts inherit the confabulation they are meant to police and cannot return an authoritative reference
value; ours come from frozen database records and physical law, which is what makes them auditable and what
supplies the repair signal. Learned grounded step-rewards \cite{lightman2024verify} supply soft signals over
reasoning steps, where we issue hard verdicts on locally checkable objects. Finally, verifier-in-the-loop
generation is established in fully formalizable domains \cite{moura2021lean,trinh2024alphageometry} where
every step is machine-checkable; our contribution is to carry the paradigm into a partially,
tiered-verifiable domain in which only some claims have authoritative checks, and to show where that partial
verifiability does and does not pay off. The distinctive contributions are the detect/repair decomposition
that localizes the bottleneck to detection, the reference-frame-artifact analysis for quantitative
properties, and the object-versus-derived-quantity scope boundary.

Our detect/repair diagnosis has a direct analogue in a different augmentation paradigm. Skill Retrieval
Augmentation \cite{su2026sra} decomposes agentic skill use into retrieval, incorporation and application,
and finds the binding constraint is incorporation, whether the model recognizes when and which skill to
load; our result has the same shape in the grounding paradigm. Both point to need-aware gating and
precision calibration as the next lever rather than to larger retrieval corpora. Relative to
document-grounded factuality checking \cite{min2023factscore}, which scores atomic claims by entailment
against retrieved passages, our verifier issues hard verdicts against structured databases and physics
rather than soft entailment against prose, trading breadth for auditability and for a repair signal that
carries the reference value. Structured extraction components are complementary and partly already inside
the pipeline: RDKit canonicalization backs the Tier-0 SMILES check, and name-to-structure parsers such as
OPSIN \cite{lowe2011opsin} are the route to widening the extractability axis our results identify as
limiting. More broadly, the survey of verification-enabled mathematical reasoning
\cite{raiyan2026ai4math} frames these workflows for fully formalizable domains; we carry them into
partially verifiable science, and our efficiency findings support the case for specialized,
tool-integrated, auditable pipelines argued from the domain-specific-superintelligence perspective
\cite{belova2026dss}.

\subsection*{Limitations}
Extraction recall bounds the method: claims the extractor misses cannot be checked, and in-loop recall is substantially below offline recall. Because the formation-energy detector shares extraction logic with the adjudicator, their blind spots may be correlated; the independent ground-truth audit provides the control. Verification is also limited to errors that are both present in the base model and extractable, with the Claude Haiku 4.5 case showing that formatting alone can suppress the benefit. Correction introduces regeneration noise, and confidence-weighted gating has yet to be validated on multi-claim surfaces. The hybrid two-stage tier improves recall but has not yet been evaluated for accuracy because it does not inject a reference. Gate timing and anchor phrasing were not varied. Finally, the dipole surface is underpowered ($n=56$), the frontier-model isotope lift does not survive multiple-comparison correction, and only the headroom transfer domain shows an accuracy gain; these results therefore support mechanism portability rather than confirmatory performance claims.

Outlook. The main opportunities are improving detection and extraction rather than expanding retrieval. Calibrated detector fusion with trust-based gating and reference injection could convert recovered recall into an accuracy gain, while formatting-robust extraction and name-to-structure parsing (e.g., OPSIN) would expand the checkable surface. Varying gate timing and re-elicitation anchors could further improve in-loop recall, and larger confirmatory runs would resolve the underpowered surfaces. Together, these directions make extraction coverage and detection precision the primary levers for extending grounded verification.

\section{Methods}\label{sec:methods}

\subsection*{Corpus} The corpus comprises 528 condition-pinned prompts across 4 models (versions and access strings in Supplementary Table~\ref{tab:models}) and four rarity strata ordered by PubChem CID, with both molecular and crystalline targets within each stratum and ground truth frozen and hashed before grading. We label the four strata R1, the well-known
entities, through R4, the rare or recently reported ones, and use those labels throughout; the
S-prefixed labels denote Supplementary Notes and Tables only.

\subsection*{Verifier} Tiered: Tier~0 syntactic (SMILES/formula hygiene) $\rightarrow$ Tier~1 identity
(PubChem formula, named-phase space group) $\rightarrow$ Tier~1.5 tabulated
property (Materials Project formation energy/band gap, CCCBDB
dipole) with accept-either-frame policy $\rightarrow$ Tier~3 DFT
(def2-TZVP; last resort). xTB Tier~2 \cite{bannwarth2019gfn2} descoped: the property
surface is fully covered by Tier~1.5 + frame policy; no corpus claim type requires semi-empirical
QM. The extractor and full verifier configuration are frozen and registered ex ante with a content hash, and every extension beyond this core (detection tiers, transfer domains) was likewise registered ex ante; the frozen extractor and verifier configs were never modified.

\subsection*{Conditions} (1) unguarded baseline, (2) self-critique no-tools (with a
stronger tool-free variant, Chain-of-Verification, reported with the Results), (3) RAG-in-prompt (oracle),
run in two variants, conversational (facts prepended, answer free-form) and constrained (facts
prepended with an instruction to restate the reference value as the final answer), (4) gated
Mode~A (verifier-guided correction, retrieve only on flag), (5) Mode~B best-of-8 rerank. The
constrained variant instantiates the extraction-enforcing RAG family and isolates how much of the
conversational baseline's no-commit rate is prompt-shaping rather than intrinsic; both variants draw
the same oracle fact block.

\subsection*{Extraction pipeline} Detection is bounded by what the extractor binds, so we make its
operation explicit (the full verify-and-correct flow is summarized in the method overview figure). Each trace is normalized before ASCII regexes extract molecular formulas, SMILES, space-group
symbols/numbers, and property values with units. Names are bound to claims by whole-word matching
(so ``benzene'' does not match inside ``nitrobenzene''), and values are bound to the nearest named
subject. The frozen materials extractor deliberately does \emph{not}
normalize Unicode-subscript formulas (C$_9$H$_8$O$_4$): these become no-commit under identical rules
in every arm rather than silent errors. Known failure classes (Unicode subscripts, unit-less bare
values, terse final answers with no adjacent subject) are the extractability limit; per-object-type extraction rate and accuracy are in Supplementary Table~\ref{tab:extraction}, and the full
extractor is in the released code.

\subsection*{Independent ground truth} Baseline error is scored against PubChem, Materials Project,
and CCCBDB directly rather than by running the verifier, which controls for the verifier's own blind
spots: the detection tier and the adjudicator share extraction logic, so a claim the extractor cannot
bind would otherwise be invisible to both the intervention and its evaluation. Because extraction rate
is the ceiling on verifier recall, this independent scoring is what separates a genuine reduction in
error from an apparent one produced by the extractor declining to commit.

\subsection*{Parser-artifact retro-audit} Motivated ex ante by the constants-domain finding,
we tested whether the materials lift could be an extraction
artifact of the same kind, the asymmetric bug in which a parser scores a digit-correct baseline
wrong and Mode~A's re-elicitation flips it correct, manufacturing lift. The materials extractor (frozen core) behaves differently: as described under the extraction pipeline, unnormalized Unicode-subscript formulas become no-commit/abstain, scored as abstention rather than error, under identical rules in every arm. The decisive test is therefore whether Mode~A converts these subscript-abstains to parseable commitments at a
higher rate than baseline, which would inflate the lift. It does not: the subscript-only miss rate
is symmetric across arms: baseline 23.8\% vs Mode~A
24.1\%, so the abstain behavior cancels and the \errcommit\ lift
arises from genuinely changed committed formulas, not parse
differences. The two denominators differ by design: the baseline is a single unguarded condition,
$n=564$; Mode~A pools the three tier-active replicates. The artifact concern has two
channels, each with its own control. The abstain channel, a correct baseline value left
unparsed and scored no-commit, is closed by the symmetry test above. The wrong-parse
channel, a correct baseline value misparsed into a different value and scored wrong,
is closed by the manual raw-trace audits of every flagged baseline error. The materials headline is not parser-manufactured. Limitation: the registered per-cell classification of a repaired-cell sample into genuine versus parse-artifact requires round-0 committed text, which was not persisted (only the round-0 verdict and final text); the population tests above close both channels at higher power.

\subsection*{Statistics} Aggregated-unit McNemar \cite{mcnemar1947} (majority outcome per
prompt$\times$model over three reps); prompt-clustered bootstrap 95\% CIs; three-outcome accounting
throughout. Per-replicate rates in supplementary. We state the dependence structure explicitly.
Per-model McNemar tests are the primary inference: within a model, the prompt$\times$model units are
distinct prompts and independent. Pooled (across-model) tests share the same prompt set across the
four models, so their units are not fully independent and their $p$-values are reported as
descriptive. To confirm the pooled headlines are not artifacts of that dependence, we add a
prompt-clustered sensitivity analysis: a cluster bootstrap of the net discordant count that resamples
whole prompts (all model rows for a prompt together, 10{,}000 resamples), reported per headline in
Supplementary Table~\ref{tab:robust}. Both significant pooled headlines, molecular formula and formation energy, survive with no resample reaching net~$\leq 0$, while crystalline space group is non-significant under clustering as in the primary test. No pooled significance required demotion.
Across the thirteen McNemar tests that carry a reported $p$-value, we control for multiple
comparisons with the Holm and Bonferroni procedures at $\alpha=0.05$. The primary headlines survive both corrections: molecular-formula ITT, isotope overall, Mode~B versus Mode~A on formation energy, the end-task derived-tier per-model Qwen result, the formation-energy in-loop test. Re-estimating these two tests at five
replicates rather than three sharpens them to $p=0.016$ (Claude Sonnet~5) and $p=0.008$ (GPT-5.5);
GPT-5.5 then crosses the Holm threshold but not Bonferroni, and Claude Sonnet~5 crosses neither, so
both remain reported as suggestive.

\subsection*{Transfer domain (physical constants)} Reference: CODATA 2022,
with ground truth frozen and hashed before grading. Grading exploits the exactness structure of
the 2019 SI revision: seven defining constants ($c$, $h$, $e$, $k_{\mathrm{B}}$, $N_{\mathrm{A}}$,
$\Delta\nu_{\mathrm{Cs}}$, $K_{\mathrm{cd}}$) and the quantities exactly derived from them ($R$,
$F$, $\sigma$, Josephson and von Klitzing constants, etc.) are exact and graded by
significant figures: a truncated but digit-correct value passes, an altered digit fails.
Measured constants are graded within the larger of the CODATA-2022 standard uncertainty and the
2018--2022 adjustment drift ($\le 4\times10^{-9}$ for every measured constant here), so neither
adjustment version nor quoted-digit truncation can register as an error. Standard-state-dependent
constants (Sackur--Tetrode) are pinned to the modern 100~kPa convention. The value extractor was
hardened after the constants-domain audit found the first-pass parser dropped exponents.


\begin{acknowledgments}
No funding was received for this research.
\end{acknowledgments}

\begin{contributions}
C.P. conceived and implemented the tiered grounded-verification pipeline, conducted all
computational experiments and benchmark runs, performed the data analysis, and wrote the original
manuscript. M.K, E.S., and H.K. supervised the research, contributed to the interpretation of
results, and reviewed and edited the manuscript. All authors read and approved the final
manuscript.
\end{contributions}

\begin{conflicts}
The authors declare no competing financial or non-financial interests.
\end{conflicts}

\begin{availability}
The condition-pinned prompt corpus, the frozen ground truth for all domains, and all experimental
results reported in this study are available with the code in the project repository at
\url{https://github.com/KurbanIntelligenceLab/grounded-matsci}, each with a per-file SHA-256
manifest. The corpus and ground-truth data are released under the Creative Commons Attribution 4.0 license, and the verification pipeline and analysis code under the MIT license.

The complete verification pipeline and analysis code are available at
\url{https://github.com/KurbanIntelligenceLab/grounded-matsci}.
\end{availability}

\subsection*{Use of AI tools}
A large language model was used to assist with manuscript drafting, copy editing, formatting, and code development. The authors independently reviewed and verified all scientific statements, proofs, analyses, code, figures, tables, and reported results, and take full responsibility for the content of the manuscript.

\subsection*{Computing resources}
All model generations were obtained through the OpenRouter API. The four primary models were Claude
Sonnet~5, GPT-5.5, DeepSeek~v4~Pro, and Qwen3-235B; two smaller models, Claude Haiku~4.5 and GPT-5.4
mini, were used only for the small-model detection comparison. Decoding used temperature $0$ (greedy)
for the single-answer conditions and temperature $0.7$ for sampled generations, including the
best-of-8 rerank ($N=8$); top-$p$ was left at the provider default. The maximum-token cap depended on
the workflow: $1{,}200$ tokens for the main arms and holdout harness, $900$ for the closed-loop
correction driver used by the two-stage and isotope experiments, and $1{,}400$ for the end-task
runs. Density-functional theory reference calculations and all database lookups
(PubChem, Materials Project, CCCBDB, IAEA NDS) ran on a single workstation CPU.

\subsection*{Reproducibility}
All reported results follow a one-shot holdout protocol: ground truth,
prompt corpus, extractor, verifier configuration, and grading tolerances were frozen and hashed before
grading, and no holdout result was iterated on. Each condition was run in triplicate, with pooled
headline significance tests re-estimated under prompt clustering and per-model tests corrected for
multiple comparisons. Every released artifact carries a per-file SHA-256 manifest and every
experimental condition carries its registration hash, so each reported number is traceable to the run
that produced it. The provider-side model-version snapshot identifiers required to reproduce the
generations are listed in the supplementary model-versions table.

\bibliography{references}

\newpage
\appendix
\setcounter{table}{0}
\renewcommand{\thetable}{S\arabic{table}}
\renewcommand{\theHtable}{S\arabic{table}}
\setcounter{figure}{0}
\renewcommand{\thefigure}{S\arabic{figure}}
\renewcommand{\theHfigure}{S\arabic{figure}}

\newcounter{supnote}
\renewcommand{\thesupnote}{S\arabic{supnote}}
\newcommand{\supnote}[2]{%
  \refstepcounter{supnote}%
  \subsection*{Supplementary Note~\thesupnote: #2}%
  \label{#1}%
}

\section{Supplementary information}\label{secA1}

This section collects the extended tables, robustness analyses, and notes that support the main-text claims but are not needed to follow the argument: the baseline error structure, the per-condition compute accounting and the retrieval-savings breakdown, a realistic non-oracle retrieval bound, a proof of the error-reduction identity, a Chain-of-Verification tool-free baseline, a reference-frame-policy robustness check, a four-detector comparison, the trust-score calibration analysis in full, a parser-artifact audit of the physical-constants domain, the isotope frontier lift re-estimated at five replicates, the model versions and access details, the per-object-type extraction rates, and a clustered re-analysis of the headline significance tests. Each item is referenced from the main text at the point it supports; the sections below state what each one shows and its single takeaway.

\subsection*{Error structure}
Table~\ref{tab:errors} shows where the 248 genuine baseline errors actually fall. Two patterns
determine where correction can help: errors concentrate in the long-tail rarity strata (R3--R4)
rather than the well-known compounds, and they concentrate in the weakest model (Qwen3-235B alone
accounts for 131 of 248). This is the error structure behind the per-model lift in
Table~\ref{tab:permodel}: correction lifts accuracy exactly where errors are dense and leaves the
near-ceiling frontier models roughly unchanged.

\begin{table}[htbp]
\caption{Genuine baseline errors by claim type $\times$ stratum (pooled models) and by model. 248
genuine baseline errors (correct = false), concentrated in the long-tail strata and the weakest
model, the pattern that governs where correction lifts accuracy
(Table~\ref{tab:permodel}).}\label{tab:errors}
\footnotesize
\begin{tabular*}{\textwidth}{@{\extracolsep{\fill}}lccl}
\toprule
claim type & total errors & leading strata (count) & representative subject \\
\midrule
formation energy & 116 & R4 (49), R3 (35), R2 (22) & halite (NaCl) \\
molecular formula & 98 & R4 (69), R3 (27) & ampicillin \\
crystalline space group & 16 & R3 (6), R4 (6) & sphalerite (ZnS) \\
reasoning system & 11 & R4 (11) & Cs$_2$AgBiBr$_6$ perovskite \\
dipole & 7 & R1 well-known (7) & nitrobenzene \\
\bottomrule
\end{tabular*}
\tabnote{By model: Qwen3-235B 131, DeepSeek v4 Pro 53, Claude Sonnet 5 46, GPT-5.5 18.}
\end{table}

\subsection*{Compute cost}
Table~\ref{tab:compute} accounts for the compute of every condition on the same footing: calls per
cell and retrieval rounds are measured, USD and wall-clock are scaled from the measured base
generation cost. The takeaway is the deployment trade between the two interventions: best-of-8
rerank (Mode~B) costs ${\sim}8\times$ baseline generation (${\sim}\$212$, ${\sim}92$~min) for its
formation-energy gain, whereas gated correction (Mode~A) costs only ${\sim}1.75\times$
(${\sim}\$46$, ${\sim}20$~min) because it retrieves only on a flag rather than on every prompt.
Fig.~\ref{fig:cost} visualizes the retrieval side of this saving.

\begin{table}[htbp]
\caption{Per-condition compute cost for the full evaluation (528 prompts $\times$ 4 models $\times$ 3
replicates = 6{,}336 cells per condition). Calls per cell and retrieval rounds are measured; USD and
wall-clock are scaled from the measured base generation cost (\$0.00418/trace) and measured call
multipliers, at 12 concurrent workers ($\sim$1.3~s/call). RAG variants issue one blanket retrieval
per prompt at deployment (100 per 100); gated Mode~A retrieves only on a flag (17 per 100, the
measured overall prompt$\times$model cell flag rate). Per-surface breakdown in Fig.~\ref{fig:cost}.}\label{tab:compute}
\footnotesize
\begin{tabular*}{\textwidth}{@{\extracolsep{\fill}}lccccc}
\toprule
condition & calls/cell & total LLM calls & retrievals/100 & USD & wall-clock (min) \\
\midrule
baseline (unguarded) & 1.00 & 6{,}336 & 0 & 26 & 11 \\
self-critique & 3.00 & 19{,}008 & 0 & 79 & 34 \\
RAG conversational & 1.00 & 6{,}336 & 100 & 26 & 11 \\
RAG constrained & 1.00 & 6{,}336 & 100 & 26 & 11 \\
gated Mode~A & 1.75 & 11{,}075 & 17 & 46 & 20 \\
Mode~B ($N{=}8$) & 8.00 & 50{,}688 & 0 & 212 & 92 \\
\bottomrule
\end{tabular*}
\tabnote{Mode~B's best-of-8 rerank costs ${\sim}8\times$ baseline generation (${\sim}\$212$,
${\sim}92$~min) for its formation-energy gain, versus gated Mode~A's ${\sim}1.75\times$ (${\sim}\$46$,
${\sim}20$~min); this is the deployment trade the reviewer raises. Chain-of-Verification
costs four LLM calls per cell (draft, plan, independent answers, revise), above self-critique's three,
and still closes no surface; the extra tool-free reasoning buys no
accuracy.}
\end{table}

\subsection*{Retrieval savings}
Fig.~\ref{fig:cost} makes the retrieval side of the cost story concrete:
because gated correction retrieves only when a claim is flagged, it issues far fewer external
lookups than blanket RAG's one-per-prompt baseline, $3.2\times$ fewer on molecular formula,
$6.2\times$ on formation energy, and $8.4\times$ on crystalline space group. The three bars report
round-0 Mode~A cell flag rates (triplicate mean) on those surfaces, where the pooled 17 per 100
figure covers all claim types.

\begin{figure}[htbp]
\centering
\includegraphics[width=0.8\textwidth]{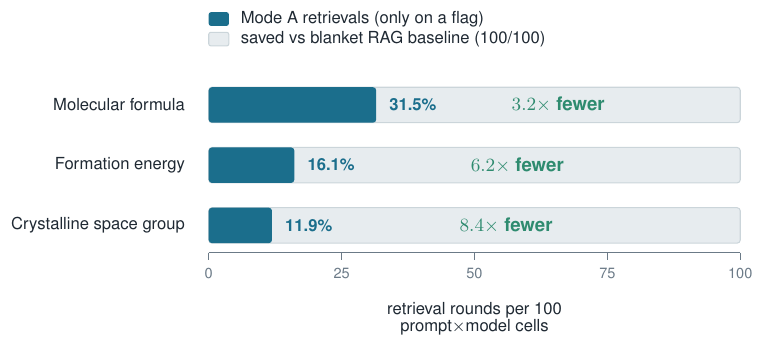}
\caption{Retrieval savings of gated correction. Gated Mode~A retrieves only on a flagged
prompt$\times$model cell, versus blanket RAG's 100/100 baseline (one lookup per prompt at
deployment), from the precise per-surface round-0 cell flag rates: molecular 31.5\% ($3.2\times$
fewer retrievals), formation energy 16.1\% ($6.2\times$), crystalline SG 11.9\% ($8.4\times$). Full
per-condition compute cost is in the compute-accounting table.}\label{fig:cost}
\end{figure}

\supnote{note:nonoracle}{Realistic (non-oracle) retrieval}
The RAG conditions in the main text are an oracle upper bound, drawing facts from the same database
records that define the ground truth, so coverage is complete and every fact is correct. To bound what a
deployed system would see, we built a realistic retriever that resolves each named entity live:
molecular names against PubChem (formula and SMILES), and crystalline and formation-energy entities by
resolving a formula and querying the Materials Project for the most-stable polymorph's space group and
formation energy. No ground-truth record is consulted during retrieval. Retriever quality varies sharply
by surface. Molecular coverage is 141 of 141 entities, all correct, because every compound in the corpus
is name-addressable in PubChem, the same source the molecular ground truth draws on. Crystalline coverage
is 27 of 127 entities (21\%), of which 20 of 27 return a space group within the accepted set; formation-
energy coverage is 23 of 101 entities (23\%), all within tolerance when returned. The two failure modes
on the long tail are a name that does not resolve to a formula, and a formula that resolves to the
most-stable polymorph rather than the phase the prompt names (for example rutile returns space group
No.~141 where the prompt fixes the No.~136 phase). Prepending these retrieved facts and generating once,
realistic retrieval matches the oracle on molecular formula (error given commitment 0.2\% versus the
oracle's 0.3\%, baseline 21.7\%) with low abstention (4.8\% versus the oracle's 42.0\%), but recovers
almost none of the oracle gain on the long tail: error given commitment holds at baseline (formation
energy 33.9\% versus 31.3\%; space group 5.3\% versus 5.8\%) and abstention rises to 56.2\% and 59.4\% as
the model, handed no usable fact, declines to commit. The verifier's claim-specific retrieval, keyed to
the named phase rather than to bare composition, is what still succeeds in this regime.

\supnote{note:identity}{proof of the error-reduction identity}
\emph{Setup.} Consider the claims committed and graded in both the baseline and Mode-A arms on a fixed
surface, with baseline error-given-commitment $e_0=\Pr(\mathrm{wrong})$. Gated correction applies a
fire/hold decision $D$ per claim (the verifier flag conjoined with the Platt gate
$\sigma(as+b)<\theta$, where $s$ is the raw verifier score from the main text and
$\sigma$ is the logistic function); a fired claim is regenerated with the reference value injected. Define the
conditional rates $r_d=\Pr(D{=}\mathrm{fire}\mid\mathrm{wrong})$,
$r_p=\Pr(\mathrm{correct\ after\ regen}\mid D{=}\mathrm{fire},\mathrm{wrong})$,
$\phi=\Pr(D{=}\mathrm{fire}\mid\mathrm{correct})$, and
$\rho=\Pr(\mathrm{wrong\ after\ regen}\mid D{=}\mathrm{fire},\mathrm{correct})$. Held claims are
unchanged, and any regeneration that leaves the committed population (a regenerated no-commit) is
excluded by the committed-in-both-arms restriction, so every claim ends either correct or wrong.

\begin{proof}
Partition the population by baseline correctness. A wrong claim (mass $e_0$) ends correct iff
$D{=}\mathrm{fire}$ and repair succeeds, probability $r_d r_p$; otherwise it ends wrong, contributing
$e_0(1-r_d r_p)$ to the post-correction error. A correct claim (mass $1-e_0$) ends wrong iff
$D{=}\mathrm{fire}$ and regeneration breaks it, probability $\phi\rho$, contributing $(1-e_0)\phi\rho$.
These two events partition the ways a committed claim can be wrong after correction and no other
transition changes correctness, so by the law of total probability
$e_1=e_0(1-r_d r_p)+(1-e_0)\phi\rho$, whence $\Delta=e_0-e_1=e_0 r_d r_p-(1-e_0)\phi\rho$.
\end{proof}

\emph{Precision form of the harm term.} The fired set has mass $F=e_0 r_d+(1-e_0)\phi$ and detector
precision $\pi=e_0 r_d/F$, so $(1-e_0)\phi=(1-\pi)F$ and the harm term equals $(1-\pi)F\rho$. Harm
therefore grows as precision falls: a high-recall, low-precision detector (the consistency-triggered
stage reaches precision $0.28$ on formation energy) fires largely on already-correct claims, and unless
the break rate $\rho$ is suppressed the gain $e_0 r_d r_p$ is offset. Two levers reduce harm without
lowering recall: injecting a reference, which raises $r_p$ toward $1$ and is what the deterministic tier
supplies but the consistency signal does not; and gating on a calibrated trust score, which lowers
$\phi$ and hence raises $\pi$. The identity thus separates the two requirements the experiments
establish: detection recall sets the reachable gain, and a reference-carrying, precision-calibrated
fire sets how much of it is realized without harm.

The identity is an exact accounting relation, following from the law of total probability applied to
the two ways a committed claim can be wrong after correction. As a numerical check on the
molecular-formula surface, substituting $e_0{=}0.22$, $r_d{=}0.82$, and $r_p{=}0.97$ with a
negligible false-positive break term gives $e_1=e_0(1-r_d r_p)=4.5\%$, consistent with the observed
post-correction error of about $4\%$.

\supnote{note:cove}{Chain-of-Verification as a stronger tool-free baseline}
Self-critique is a single-pass no-tools control. To test whether a
stronger structured self-verification method fares differently, we ran Chain-of-Verification
\cite{dhuliawala2024cove}: the model drafts an answer, plans a set of verification questions targeting
its own factual claims, answers those questions independently, then revises. No external tool is used
at any stage. We ran the four-stage protocol on the three error surfaces (molecular formula, formation
energy, crystalline space group) with the four primary models in triplicate, using the same corpus,
system prompt, and frozen grader (final-answer rule) as the self-critique arm; 4{,}427 cells, no API
errors. It closes no surface: on the two surfaces whose answer formatting the frozen grader parses
without ambiguity, error given commitment is essentially unchanged (molecular 21.7\% to 18.8\%,
crystalline 5.8\% to 6.0\%), and on formation energy it does not improve on the baseline. A formatting
note bounds the formation-energy comparison: about a quarter of Chain-of-Verification answers write the
value with a Unicode minus sign, which the frozen extractor's ASCII regex misreads, inflating the raw
error rate; the glyph rate differs across conditions, so we report the raw frozen-grader numbers and do
not renormalize the published self-critique value. Under a sensitivity check that maps the Unicode minus
to ASCII for all conditions, Chain-of-Verification formation-energy error is 29.5\% against a 31.3\%
baseline, still no improvement. The finding matches self-critique: a stronger tool-free protocol does
not reach the gap that tool-grounded correction closes, because the model cannot supply the external
reference value it lacks.

\subsection*{Reference-frame robustness}
A quantitative-property claim can look ``wrong'' merely because the model and the reference use
different conventions (e.g.\ a computed vs an experimental band gap). Table~\ref{tab:frames} checks
this by re-grading under two policies. Only the band-gap surface is frame-sensitive: its error
collapses from 34.6\% to 0.0\% once either documented frame is accepted (107 of 116 naive flags were
correct experimental gaps), which is why band gap is excluded from the property headline.
Formation energy, whose frames nearly coincide, barely moves (33.7\% to 31.3\%) and remains a genuine
error surface; the frame-insensitive surfaces are unchanged.

\begin{table}[htbp]
\caption{Post-audit baseline error by claim type under the two reference-frame policies. Band gap
collapses once either documented frame is accepted (107/116 naive flags were correct experimental
gaps); formation energy, whose frames nearly coincide, barely moves and remains a genuine error
surface; frame-insensitive surfaces are unchanged.}\label{tab:frames}
\footnotesize
\begin{tabular*}{\textwidth}{@{\extracolsep{\fill}}lcc}
\toprule
claim type & naive single-frame & accept-either-documented-frame \\
\midrule
band gap & 34.6\% & 0.0\% \\
formation energy & 33.7\% & 31.3\% \\
molecular formula & 21.7\% & 21.7\% \\
dipole & 12.5\% & 12.5\% \\
space group & 5.8\% & 5.8\% \\
crystal system & 2.7\% & 2.7\% \\
\bottomrule
\end{tabular*}
\end{table}

\subsection*{Why a deterministic checker}
Table~\ref{tab:detectors} compares our deterministic tiered checker against three alternatives on the
same frozen baseline claims: a frontier reasoning-class LLM-as-judge (no tools), a
sampling-consistency baseline (SelfCheckGPT-style), and a semantic-entropy detector.
The deterministic checker gives the high recall
that matters for catching errors on the molecular-formula surface (0.92) while keeping precision
usable (0.51), where the alternatives either trade recall for precision (the LLM judge, 0.71 recall
at 0.86 precision) or over-flag (sampling-consistency, 0.85 recall but only 0.40 precision, with many
false alarms); no single detector dominates on every surface, but only the deterministic tier
returns an external reference value to inject, which is what the correction step requires.
The semantic-entropy detector, which clusters the eight
samples by meaning before measuring entropy, behaves as a refinement of the
sampling-consistency baseline: it matches its recall on molecular formula (0.93) and raises
formation-energy precision to 0.51 (from 0.36) at slightly lower recall (0.57 from 0.67), but like it
returns no reference value and so cannot drive the correction step.

\begin{table}[htbp]
\caption{Four-detector precision (P) and recall (R) by surface (offline, frozen baseline claim text):
the deterministic verifier, a frontier reasoning-class LLM-as-judge (no tools) \cite{zheng2023judge},
a sampling-consistency baseline (SelfCheckGPT-style \cite{manakul2023selfcheckgpt}, flag if
modal-object agreement $\le 0.25$, threshold frozen on a dev half), and a semantic-entropy
detector \cite{farquhar2024semantic} that clusters the same eight samples by meaning and flags high
normalized cluster entropy (threshold $0.44$ frozen on a dev half). Bold marks the highest recall per
surface; only the deterministic verifier also returns a reference value to inject.}\label{tab:detectors}
\footnotesize
\setlength{\tabcolsep}{3pt}
\renewcommand{\arraystretch}{1.18}
\begin{tabular*}{\textwidth}{@{\extracolsep{\fill}}l cc cc cc cc@{}}
\toprule
& \multicolumn{2}{c}{\textbf{Deterministic}} & \multicolumn{2}{c}{\textbf{LLM-as-judge}} & \multicolumn{2}{c}{\textbf{Sampling-consistency}} & \multicolumn{2}{c}{\textbf{Semantic entropy}} \\
\cmidrule(lr){2-3}\cmidrule(lr){4-5}\cmidrule(lr){6-7}\cmidrule(l){8-9}
surface & P & R & P & R & P & R & P & R \\
\midrule
molecular formula & 0.51 & 0.92 & 0.86 & 0.71 & 0.40 & 0.85 & 0.31 & \textbf{0.93} \\
formation energy & 0.55 & 0.31 & 0.53 & 0.53 & 0.36 & \textbf{0.67} & 0.51 & 0.57 \\
crystalline SG & 0.20 & \textbf{0.69} & 0.67 & 0.38 & 0.17 & 0.17 & 0.27 & 0.20 \\
\bottomrule
\end{tabular*}
\tabnote{P, precision; R, recall. The sampling-consistency detector is strongest exactly where
symbolic extraction is weakest (formation energy), but returns no reference value, so it can detect but
not repair. Semantic entropy, computed by reanalysis of the stored eight-sample sets
with no new generation, tracks the sampling-consistency detector while improving formation-energy
precision ($0.51$ versus $0.36$) through meaning-based clustering; like it, semantic entropy is
detection-only and returns no reference value to inject.}
\end{table}

\supnote{note:calibration}{Trust-score calibration in full}
The verifier's trust score (fraction of checked claims passing) predicts committed-claim correctness
on $n=968$ baseline items; on a held-out half ($n=484$, base correct rate 0.81) raw trust is
overconfident (expected calibration error, ECE, 0.111) \cite{guo2017calibration}. Platt scaling \cite{platt1999} reduces
ECE to 0.060 (isotonic 0.088); Brier 0.121 to 0.111. The trust score is discrete (few checkable
claims per trace), yielding a coarse reliability curve; ECE is the reportable summary and Platt is
recommended for deployment thresholding. Against model verbalized confidence, each
generating model asked to self-report P(its committed claim correct), the grounded trust score is
better calibrated. On the identical paired population ($n=958$, items carrying both signals), raw
uncalibrated: verbalized confidence ECE 0.139 (overconfident: mean 0.87, accuracy 0.83) vs verifier
trust score ECE 0.093. Recalibrating the trust score by Platt on a separate holdout half drops it
further to 0.060 ($n=484$; not directly comparable to the paired set, which is uncalibrated). The
models' own confidence is a poorly-calibrated abstention signal; the external verifier's grounded
trust score is better even before recalibration and substantially better after. Eliciting confidence
inline with the answer (a dedicated calibration arm, $n=1{,}191$) makes the model
more confident (mean 0.91 vs 0.87 post-hoc) without improving calibration (ECE 0.108); both
elicitation modes remain worse-calibrated than the grounded trust score
(Fig.~\ref{fig:calibration}).

\begin{figure}[htbp]
\centering
\includegraphics[width=\textwidth]{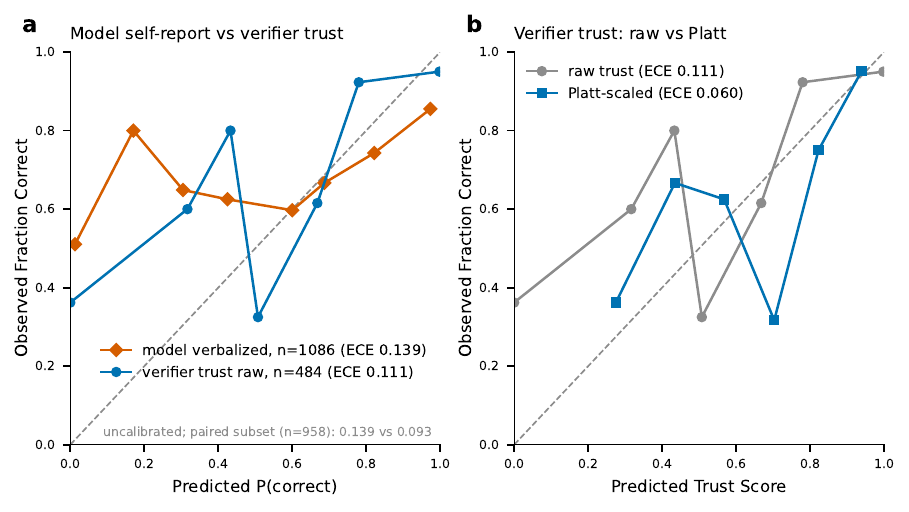}
\caption{Two-panel reliability diagram. \textbf{a}, Model verbalized confidence ($n=1{,}086$) vs
verifier trust score, raw and uncalibrated ($n=484$), each on its own population, with the
paired-subset comparison ($n=958$); the grounded trust score is the better-calibrated abstention
signal on every population (ECE values in text). \textbf{b}, Verifier trust score, raw vs
Platt-scaled, on the $n=484$ trust-only holdout.}\label{fig:calibration}
\end{figure}
\supnote{note:constants}{Physical-constants audit (parser artifact and per-model detail)}
The first-pass constants result (84\% to 99\%) was traced by a line-by-line audit of every flagged
constant: ${\sim}23$ of ${\sim}29$ baseline ``errors'' were orchestration-layer extraction
artifacts: the model stated the correct value in scientific notation
($1.602176634\times10^{-19}$, $4\pi\times10^{-7}$, or LaTeX-brace form) but the value
parser dropped the exponent, and Mode~A ``fixed'' these by eliciting an ASCII-parseable restatement
of the same value. After hardening the parser (Unicode superscripts, LaTeX braces,
symbolic-$\pi$, digit-group spacing) and re-grading the full triplicate under the registered
tolerance policy, no model changes significantly: the two frontier models start at 100\% baseline
and dip slightly under Mode~A (Claude Sonnet 5 100\% to 97\%, GPT-5.5 100\% to 94\%) from regeneration noise
on already-correct answers, while the open-weights models each gain ${\sim}5.7$ points (DeepSeek v4 Pro
91.4\% to 97.1\%, Qwen3-235B 88.6\% to 94.3\%); all $p\ge0.5$. The verifier still detects the
genuinely wrong constants that remain (Qwen3-235B dropping the $\times10^{-8}$ on the Stefan--Boltzmann
constant, a missing sign on the electron $g$-factor, a $10^{3}$ unit-scale slip on the Josephson
constant). This asymmetric artifact (baseline scored wrong, re-elicitation flipped it to
correct) is the failure mode the materials retro-audit checked for and, via the symmetric
subscript-abstain test, ruled out for the materials headline.

\supnote{note:isotope}{Isotope frontier lift at five replicates}
The two per-model frontier isotope lifts reported in the main text (Claude Sonnet~5 and
GPT-5.5) rest on triplicate McNemar tests with $p=0.016$ and $p=0.021$, which do not survive correction
for multiple comparisons. Because these borderline claims carry the paper's only frontier-model lift, we
re-estimated the isotope arm at five replicates. We generated two additional replicates of both arms
(baseline and gated Mode~A) over the 63 isotope prompts and four primary models (1{,}008 cells, no API
errors), using the same prompts, ground truth, system prompt, tolerance, and per-model aggregation
(correct versus not-correct on model-by-subject units, majority over replicates) as the triplicate
analysis. The per-model net lifts are unchanged (Claude Sonnet~5 net $+7$, GPT-5.5 net $+8$, Qwen3-235B
net $+11$; DeepSeek v4 Pro rises from $+4$ to $+6$), and the two frontier $p$-values sharpen to
$p=0.016$ (Claude Sonnet~5) and $p=0.008$ (GPT-5.5). Under the same thirteen-test Holm and Bonferroni
family, GPT-5.5 then crosses the Holm threshold but not Bonferroni, and Claude Sonnet~5 crosses neither.
The added replicates therefore tighten the estimates without changing the conclusion, and we continue to
report the frontier lift as suggestive rather than confirmatory.

\subsection*{Model versions and reproducibility}
All generations were served through the OpenRouter API. Table~\ref{tab:models} records the exact
model string for each model so that the runs can be reproduced against the same provider-side snapshots.

\begin{table}[htbp]
\caption{Model versions. Snapshot identifiers are the OpenRouter model strings used at run time.}\label{tab:models}
\footnotesize
\begin{tabular*}{\textwidth}{@{\extracolsep{\fill}}lll}
\toprule
model & role & OpenRouter model string \\
\midrule
Claude Sonnet 5 & primary & \texttt{anthropic/claude-sonnet-5} \\
GPT-5.5 & primary & \texttt{openai/gpt-5.5} \\
DeepSeek v4 Pro & primary & \texttt{deepseek/deepseek-v4-pro} \\
Qwen3-235B & primary & \texttt{qwen/qwen3-235b-a22b-2507} \\
Claude Haiku 4.5 & small (detection only) & \texttt{anthropic/claude-haiku-4.5} \\
GPT-5.4 mini & small (detection only) & \texttt{openai/gpt-5.4-mini} \\
\bottomrule
\end{tabular*}
\tabnote{Hardware: all model queries were issued through the OpenRouter API from a
single Apple M4 Pro MacBook Pro (24~GB unified memory), which also ran the local DFT reference
calculations and database lookups.}
\end{table}

\subsection*{Extraction rates and the recall ceiling}
The verifier can only correct what its extractor first commits to a parseable claim, so we report
the per-object-type extraction rate alongside accuracy. Table~\ref{tab:extraction} gives both: extraction
rate (the fraction of cells that commit a parseable value) sets a hard ceiling on verifier recall,
which is why the crystalline space-group surface, committed in only 54\% of cells, has a low
detection recall regardless of how good the adjudicator is.

\begin{table}[htbp]
\caption{Frozen extractor holdout extraction rate and accuracy by object type. Extraction rate =
fraction of cells committing a parseable claim ($1-$ no-commit); accuracy $\vert$ commit =
correctness among committed. Baseline unguarded condition, frozen extractor. The extraction rate is
the ceiling on verifier recall.}\label{tab:extraction}
\footnotesize
\setlength{\tabcolsep}{4pt}
\renewcommand{\arraystretch}{1.18}
\begin{tabular*}{\textwidth}{@{\extracolsep{\fill}}l cc cc@{}}
\toprule
object type & $n$ & committed & extraction rate & accuracy $\vert$ commit \\
\midrule
molecular formula & 564 & 451 & 80.0\% & 78.3\% \\
crystalline space group & 508 & 275 & \nullv{54.1\%} & 94.2\% \\
reasoning system (crystal system) & 404 & 404 & 100.0\% & 97.3\% \\
formation energy & 404 & 371 & 91.8\% & 68.7\% \\
band gap & 176 & 175 & 99.4\% & 100.0\% \\
dipole & 56 & 56 & 100.0\% & 87.5\% \\
\bottomrule
\end{tabular*}
\tabnote{Extraction rate is the ceiling on verifier recall: the space-group tier's 54\% commit
rate (grey) caps its detection recall regardless of adjudication quality.}
\end{table}

\subsection*{Robustness of the headline statistics}
The pooled McNemar tests in the main text treat prompt$\times$model cells as independent, but the
three replicates of each prompt are not. Table~\ref{tab:robust} re-tests the two significant headlines
under a prompt-clustered cluster bootstrap that resamples whole prompts (all four model rows
together): both the molecular-formula and formation-energy net effects remain strictly positive
across 10{,}000 resamples (fraction with net~$\leq0$ is 0.000), and crystalline space group stays
non-significant, so the headlines are not artifacts of treating replicates as independent.

\begin{table}[htbp]
\caption{Prompt-clustered sensitivity of the pooled Mode-A McNemar headlines. Cluster bootstrap of
the net discordant count (lift $-$ harm), resampling whole prompts (all four model rows together),
10{,}000 resamples. ``frac.\ net~$\leq 0$'' is the fraction of resamples with non-positive net (a
one-sided robustness check). Both significant headlines survive clustering; crystalline is
non-significant as in the primary per-model test.}\label{tab:robust}
\footnotesize
\begin{tabular*}{\textwidth}{@{\extracolsep{\fill}}lcccccc}
\toprule
surface & units & prompts & lift & harm & net (95\% CI) & frac.\ net~$\leq 0$ \\
\midrule
molecular formula & 444 & 141 & 86 & 6 & $+80$ $[62, 99]$ & 0.000 \\
formation energy & 369 & 101 & 45 & 18 & $+27$ $[14, 40]$ & 0.000 \\
crystalline space group & 258 & 102 & 5 & 7 & $-2$ $[-10, 5]$ & 0.75 \\
\bottomrule
\end{tabular*}
\tabnote{Units = prompt$\times$model pairs with a gradable value in both arms; baseline (single)
vs Mode~A (majority over three reps), joined by subject$\times$model. Clustering is by prompt.}
\end{table}

\end{document}